\newcommand{\data}{x}
\newcommand{\dataSpace}{\mathcal X}
\newcommand{\veps}{\varepsilon}
\newcommand{\ee}{{\rm e}\hspace{1pt}}
\def\eqref#1{equation~\ref{#1}}
\def\Eqref#1{Equation~\ref{#1}}
\def\1{\bm{1}}
\DeclareMathAlphabet{\mathsfit}{\encodingdefault}{\sfdefault}{m}{sl}
\SetMathAlphabet{\mathsfit}{bold}{\encodingdefault}{\sfdefault}{bx}{n}
\def\gA{{\mathcal{A}}}
\def\gN{{\mathcal{N}}}
\def\gO{{\mathcal{O}}}
\def\sP{{\mathbb{P}}}
\theoremstyle{plain}
\newtheorem{thm}{Theorem}[section]
\newtheorem{lem}[thm]{Lemma}
\theoremstyle{definition}
\newtheorem{defn}[thm]{Definition}
\newtheorem{assumption}[thm]{Assumption}
\theoremstyle{remark}
\newtheorem{example}[thm]{\textit{Example}}
\title{On Using Secure Aggregation in Differentially Private \\Federated Learning with Multiple Local Steps}
\author{\name Mikko A. Heikkil\"a \email mikko.a.heikkila@helsinki.fi \\
      \addr Department of Computer Science\\
      University of Helsinki
      \thanks{Work done partly at Telefónica Research, Barcelona.}
      }
\begin{document}

\maketitle

\begin{abstract}

Federated learning is a distributed learning setting where the main aim is to train machine learning models without having to share raw data but only what is required for learning. To guarantee training data privacy and high-utility models, differential privacy and secure aggregation techniques are often combined with federated learning. However, with fine-grained protection granularities, e.g., with the common sample-level protection, the currently existing techniques generally require the parties to communicate for each local optimization step, if they want to fully benefit from the secure aggregation in terms of the resulting formal privacy guarantees. In this paper, we show how a simple new analysis allows the parties to perform multiple local optimization steps while still benefiting from using secure aggregation. We show that our analysis enables higher utility models with guaranteed privacy protection under limited number of communication rounds.
\end{abstract}


\section{Introduction}
\label{sec:intro}

Federated learning (FL; \citealt{McMahan_2016, Kairouz_et_al_2019}) is a common distributed learning setting, where a central server and several clients holding their own local data sets collaborate to train a single global model. The main feature in FL is that the clients do not directly communicate data, but only what is required for learning, e.g., gradients or updated model parameters (pseudo-gradients).

While FL satisfies the data minimization principle, i.e., only what is actually needed is communicated while the actual raw data never leaves the client, it does not protect against privacy attacks such as membership inference \citep{Shokri2016-zl} or reconstruction \citep{Fredrikson2014-go,Yeom2017-nv}.
Instead, training data privacy is commonly ensured by combining differential privacy (DP; \citealt{dwork_et_al_2006}), a formal privacy definition, and secure multiparty computation (MPC; \citealt{Yao_1982}) with FL (see, e.g., \citealt{Kairouz_et_al_2019}).

DP is essentially a robustness guarantee for stochastic algorithms, which guarantees that small perturbations to the inputs have small effects on the algorithms' output probabilities. What constitutes a small perturbation depends on the chosen protection granularity: the same basic DP definition can be used for ensuring privacy on anything from single sample to entire data set level. 
In turn, MPC protocols can be used to limit the amount of information an adversary has about computations. In FL, secure aggregation (SecAgg) protocols, a specialised form of secure computation that requires significantly less resources than general MPC, are commonly used for communicating model updates from the clients to the server. This approach enables provably better distributed DP (DDP) guarantees, that rely jointly on a group of clients, than is possible to achieve by any single client in isolation.

Under the general FL setup, two main alternatives are commonly considered: cross-device FL and cross-silo FL \citep{Kairouz_et_al_2019}. In cross-device FL, each client is assumed to have a small local data set, while the total number of clients is large, e.g., thousands or millions. In the cross-silo case, the total number of clients is small, for example, a dozen, but each client is assumed to have a larger local data set. 
In this paper, our running example is standard cross-silo differentially private FL (DPFL) where the clients communicate all updates to the server using SecAgg.%
\footnote{Instead of considering any specific SecAgg implementation, in this work we mostly assume an idealised trusted aggregator. We discuss practical implementations in Appendix~\ref{sec:SecAgg_in_practice}.} 
In this setting, the most useful DP protection granularity is typically something strictly more fine-grained than client-level: when clients are, e.g., different hospitals or banks, there are typically several individuals in a single clients' local data set and the protection granularity needs to match the use case.

While client-level granularity in DPFL is, at least in principle, straightforward to combine with SecAgg, more fine-grained granularities such as sample-level DP can present problems: using existing techniques one has to choose between i) having DDP guarantees with less noise due to SecAgg but with all clients using only a single local optimization step per FL round, and ii) having more noisy local DP (LDP) guarantees that do not formally benefit from SecAgg while allowing the clients to do more local optimization steps per FL round. 
Both of these options have significant drawbacks: the amount of server-client communications is typically one of the first bottlenecks that limit model training in FL, while LDP guarantees regularly require noise levels that heavily affect the resulting model utility. In this paper we show that this trade-off is not unavoidable but can be largely remedied by a simple new analysis of the problem.

\paragraph{Our Contribution}
\begin{itemize}
    \item We present a novel and simple theoretical privacy analysis showing when we can increase the number of local optimization steps in FL using fine-grained DP granularity, 
    while still benefiting from DDP guarantees using a trusted aggregator.
    \item We demonstrate empirically that the proposed approach can lead to large utility benefits (in terms of prediction accuracy and loss) 
    without requiring any changes to the underlying algorithms under both iid and heterogeneous client data splits. In our experiments with limited number of global communication rounds, using a convolutional neural network with Fashion MNIST data set as well as linear models on CIFAR-10 (using pre-trained model as feature extractor) and ACS Income data sets, we improve prediction accuracy roughly by $16\%$ points, $4\%$ points, and $4\%$ points, respectively (corresponding to $23\%$, $5\%$ and $5\%$ improvements).
    \item Our results point to a mismatch between the current theoretical understanding of vanilla DPFL (standard DPFL with FedAvg) and practical results.
\end{itemize}


\section{Related Work}
\label{sec:related_work}

There is a significant amount of existing work focusing on the general problem of combining DP with FL, although the focus has mostly been on the cross-device FL setting with user- or client-level DP. To the best of our knowledge, while the combination of DPFL with SecAgg is certainly not novel (see, e.g., \citealt{Truex_et_al_2019, Kairouz_et_al_2019, heikkila_2020,  Stevens_Skalka_Vincent_Ring_Clark_Near_2021, Yang_2023}), there is very limited amount of work specifically on the privacy analysis when the clients do multiple local optimization steps with fine-grained DP and communicate the results via SecAgg. 
As far as we know, the only directly relevant analysis is by \cite{Noble_Bellet_Dieuleveut_2023}, who consider the Gaussian mechanism for sample-level DP in the cross-device setting without secure aggregation. Our main theorem (Theorem~\ref{thm:dpsgd_with_local_steps}) can be seen as extending their result on the Gaussian mechanism to other DP mechanisms that are more suitable for combining with actual secure aggregation protocols.
\footnote{Note that \citep[Algorithm~4]{Truex_et_al_2019} also state a similar specialised version for the Gaussian mechanism, i.e., they use several local optimization steps with sample-level DP and SecAgg in FL, while scaling the Gaussian noise for DDP considering all the clients. However, as also noted by \cite{Malekzadeh_2021}, 
the approach of \cite{Truex_et_al_2019} would require a separate proof of privacy beyond what is actually provided in the paper.}

Considering other existing work in more detail, we can distinguish some main lines of closely-related research. 
Besides the DDP approach we focus on, i.e., the combination of LDP noise with secure aggregation, the use of LDP mechanism by each client independently provides robust privacy with few assumptions, but generally results in worse utility (see, e.g., \citealt{Kasiviswanathan_Lee_Nissim_Raskhodnikova_Smith_2008,DBLP:conf/eurosys/Truex0CGW20}). Another recently considered approach assumes a trusted server that is tasked with the DP noise addition, and does privacy accounting via the matrix mechanism \citep{DBLP:journals/corr/abs-2411-18752}. This can be shown to improve the privacy-utility trade-off compared to standard DP-SGD under some assumptions, but is not directly compatible with the secure aggregation approach for DDP we focus on in this work.

Looking at somewhat more closely-related papers in terms of the general setting, there is plenty of work proposing novel learning methods for FL, assuming sample-level DP and LDP noise with SecAgg for improved DDP guarantees. While the existing work only uses a single local optimization step (see, e.g., \citealt{heikkila_2020, Malekzadeh_2021, Stevens_Skalka_Vincent_Ring_Clark_Near_2021, Yang_2023}), our analysis can be leveraged in this setting to enable running multiple local steps generally for many such methods without requiring any other changes to the algorithms.

Another clear line of work has focused on introducing novel discrete DP mechanisms that can be used with additively homomorphic encryption techniques, which typically operate on the group of integers with modulo additions. 
\cite{Agarwal_Suresh_Yu_Kumar_Mcmahan_2018} proposed 
a binomial mechanism that provides DP using 
discrete binomial noise. Improving on the binomial mechanism, \cite{Canonne_Kamath_Steinke_2020} proposed a discrete Gaussian mechanism, while \cite{Agarwal_Kairouz_Liu_2021} introduced a Skellam mechanism and \cite{Chen_et_al_2022} a Poisson-binomial mechanism, both of which improve on the discrete Gaussian, e.g., by being infinitely divisible distributions: the sum of Skellam/Poisson-binomial distributed random variables is another Skellam/Poisson-binomial random variable. Our work is not focused on introducing new DP mechanisms, but our analysis allows for using many different DP noise mechanism. In particular, our analysis allows for using LDP noise with SecAgg including when using infinitely divisible DP mechanisms, such as the Skellam mechanism, with pseudo-gradients and fine-grained DP protection level.

While our main focus is on privacy accounting with SecAgg under limited communication budget, there has also been considerable effort by the community to reduce the amount of required communication further by applying quantization to the gradients \citep{
Agarwal_Suresh_Yu_Kumar_Mcmahan_2018, Kairouz_Liu_Steinke_2021, Agarwal_Kairouz_Liu_2021, Chen_et_al_2022, Jin_Huang_He_Dai_Wu_2021, Chaudhuri_Guo_Rabbat_2022, Guo_Chaudhuri_Stock_Rabbat_2023} 
or by compressing the updates sent by the clients 
\citep{Triastcyn_Reisser_Louizos_2021, Chen_Choquette-Choo_Kairouz_Suresh_2022}. 
In principle, any such technique for compressing the model updates compatible with SecAgg can also be directly combined with our privacy analysis. In contrast, benefiting from gradient quantization is not entirely straightforward as in our case the model updates are pseudo-gradients and not gradients. 
We leave a detailed consideration and comparison of the possible methods for reducing the required communication budget beyond what is possible by pushing more optimization steps to the clients for future work.

In summary, while many of the contributions cited above, e.g., novel DP mechanisms, are not limited to cross-device FL, all the experiments and use cases mentioned in the cited papers that are compatible with SecAgg and use multiple local steps only consider DDP with \emph{user- or client-level DP} in cross-device FL. In contrast, 
we focus on more fine-grained DP granularities, namely on \emph{sample-level DP}. As we discuss in Section~\ref{sec:background}, combining sample-level DP with multiple local steps and LDP noise with SecAgg for improved utility requires a novel privacy analysis. The main aim of this paper is to provide such an analysis.

While the currently existing theoretical convergence bounds for vanilla DPFL with FedAvg do not show any benefit from increasing the number of local steps in DPFL (see \citealt[Theorem 3.2]{Malekmohammadi_Yu_Cao_2024}), we empirically demonstrate the utility of our analysis in Section~\ref{sec:experiments} after stating the results in Section~\ref{sec:local_steps_with_secsum}. Our results clearly highlight the need for improving the theoretical analysis of standard DPFL over what is shown by \cite{Malekmohammadi_Yu_Cao_2024} to understand when increasing the number of local steps is useful (compare this disagreement of empirical results and theory to the discussion by \citealt{pmlr-v162-mishchenko22b} on the provable usefulness of local steps in vanilla non-DP FL).

\section{Background}
\label{sec:background}

Federated learning (FL, \citealt{McMahan_2016, Kairouz_et_al_2019}) is a collaborative learning setting, where the participants include a central server and clients holding some data. On each FL round, the server chooses a group of clients for an update and sends them the current model parameters. The chosen clients update their local model parameters by taking some amount of optimization steps using only their own local data, and then send an update back to the server. The server then aggregates the client-specific contributions to update the global model. 
We use the standard federated averaging update rule: assuming w.l.o.g. that clients $i=1,\dots,N$ have been selected at FL round $t$, and that client $i$ sends an update $\Delta^{(t)}_i$ (pseudo-gradient), the updated global model $\theta_t$ is given by 
\begin{equation}
    \theta_t = \theta_{t-1} + \frac{1}{N} \sum_{i=1}^N \Delta_i^{(t)} .
\end{equation}

\subsection{Differential Privacy}

We want to guarantee privacy of the trained model w.r.t. the training data, for which we use differential privacy (DP). Writing the space of possible data sets as $\dataSpace^*:= \cup_{n \in \mathbb N} \dataSpace^n$, we have the following:
\begin{defn}
    \label{def:dp}
    \citep{dwork_et_al_2006,dwork2006our}
	Let $\varepsilon > 0$ and $\delta \in [0,1]$. 
	A randomised algorithm $\gA: \, \dataSpace^* \rightarrow \mathcal{O}$ is  $(\veps, \delta)$-DP 
	if for every $\data, \data' \in \dataSpace^*: \data \simeq \data'$, and every measurable set $E \subset \mathcal{O}$, 
	\begin{equation*}
		\begin{aligned}
			&\sP ( \gA (\data) \in E ) \leq \ee^\varepsilon \sP (\gA (\data') \in E ) + \delta,
		\end{aligned}
	\end{equation*}
    where $\simeq$ is a neighbourhood relation. 
	$\gA$ is tightly $(\veps,\delta)$-DP, 
	if there does not exist $\delta' < \delta$
	such that $\gA$ is $(\veps,\delta')$-DP. 
	When $\delta=0$, we write $\veps$-DP and call it \emph{pure DP}. The more general case $(\veps,\delta)$-DP is called \emph{approximate DP} (ADP).
\end{defn}

Definition~\ref{def:dp} can be equivalently stated as a bound on the so-called hockey-stick divergence:
\begin{defn}
    \label{def:hockey-stick-div}
    Let $\alpha > 0$. The hockey-stick divergence between distributions $P,Q$ is given by 
    \begin{equation}
        H_{\alpha} (P \| Q) 
        := \mathbb E_{t \sim Q} \left( \left[\frac{dP}{dQ}(t) - \alpha \right]_+ \right) 
        = \mathbb E_{t \sim Q} \left( \left[\frac{p(t)}{q(t)} - \alpha \right]_+ \right) ,
    \end{equation}
    where $[a]_+ := \max(a,0)$, $\frac{dP}{dQ}$ is the Radon-Nikodym derivative, and $p,q$ are the densities of $P,Q$, respectively. In the rest of this paper, we assume that all the relevant densities exists.
\end{defn}

It has been shown that a randomised algorithm $\gA$ is $(\varepsilon,\delta)$-DP iff $\sup_{\data \simeq \data'} H_{\ee^{\varepsilon}} (\gA(\data) \| \gA (\data')) \leq \delta$ \citep{Barthe_2013,Barthe_Olmedo_2013}.

Our main results do not depend on the exact neighbourhood definition, but in all the experiments we use the add/remove relation or unbounded DP, that is, $\data, \data' \in \dataSpace^*$ are neighbours, if $\data$ can be transformed into $\data'$ by adding or removing a single protected unit from $\dataSpace$. For the protection granularity, although this is not a strict limitation, we focus on the common sample-level DP, i.e., a single protected unit corresponds to a single sample of data. As noted earlier, our analysis could be advantageous for anything more fine-grained than client-level DP, e.g., element-level DP \citep{Asi_Duchi_Javidbakht_2019} or even for individual-level when there are more than one individual in the clients' local data.

We also make use of dominating pairs:
\begin{defn}
    \label{def:dominating_pairs}
    \citep{Zhu2021-kb}
	A pair of distributions $(P,Q)$ is a dominating pair for a stochastic algorithm $\gA$, if for all $\alpha \geq 0$ 
     \begin{equation}
     \sup_{\data,\data' \in \dataSpace^*: \data \simeq \data'} H_{\alpha} \left(\gA (\data) \| \gA (\data') \right) \leq H_{\alpha} \left( P \| Q \right),
     \end{equation}
     where $H_{\alpha}$ is the hockey-stick divergence (Definition~\ref{def:hockey-stick-div}).
\end{defn}

\subsection{Problem with Local Steps in DPFL with SecAgg}

While DP offers strict privacy protection, it comes at the cost of reduced model utility. This is especially true in the local DP (LDP) setting, where each client protects its own data independently of any other party \citep{Kasiviswanathan_Lee_Nissim_Raskhodnikova_Smith_2008}. One well-known technique to improve model utility in DPFL has been to utilise secure aggregation (SecAgg) to turn LDP guarantees into distributed DP (DDP) guarantees that depend on multiple clients (see, e.g., \citealt{Kairouz_et_al_2019}).%
\footnote{Here, we distinguish between local DP (LDP, each client guarantees DP for it's own contribution locally and in isolation), distributed DP (DDP, a group of clients jointly contribute to the resulting DP guarantees against outside adversaries without a trusted central party) and centralised DP (CDP, a trusted party guarantees DP for everyone).} 
In effect, this means that when the clients are able to collaborate, they can jointly scale the locally used noise levels to reach a target DDP guarantee resulting in improved utility compared to the LDP guarantees. Even if the clients do not explicitly collaborate on calibrating the noise levels, the DDP guarantees resulting from using SecAgg will nevertheless be stronger compared to the LDP guarantees reached by each client individually.

However, naively combining fine-grained DP protection with SecAgg for DDP runs into problems, as we demonstrate in the rest of this section. 
For convenience, we give the pseudo-code for running standard DPFL with sample- and user-level DP protection in Appendix~\ref{sec:standard_algos}.

Starting with the unproblematic case of client-level DP, writing $TA$ for an ideal trusted aggregator and using the well-known Gaussian mechanism \citep{dwork2006our} for simplicity, one can get DDP guarantees for any number of local optimization steps with the following update:
\begin{equation}
    \label{eq:param-clip-and-noise}
    \theta_t = \theta_{t-1} + \frac{1}{N} TA \left( \sum_{i=1}^N clip_C(\Delta^{(t)}_i) + \xi^{(t)}_i \right) ,
\end{equation}
where the sum inside $TA$ is done by a trusted aggregator, $clip_C$ ensures that each client-specific update has $\ell_2$-norm bounded by the constant $C$, and $\xi^{(t)}_i$ is Gaussian noise s.t. $\sum_{i=1}^N \xi^{(t)}_i$ gives the DDP protection level we are aiming for. 
As the clipping and noise are applied directly to the updated weights after the local optimization has finished, the privacy protection is not affected by the number of local optimization steps client $i$ is using to arrive at $\Delta_i^{(t)}$ before applying DP.

There is also a simple approach that works with more fine-grained granularities, when the clients use a single local optimization step with common learning rate $\gamma$ and, for example, standard DP stochastic gradient descent (DP-SGD, \citealt{Song_Chaudhuri_Sarwate_2013}) again utilising Gaussian noise: we can take $\Delta_i^{(t)} = -\gamma (g_i^{(t)} + \xi_i^{(t)})$, where $g_i^{(t)}$ is a sum of clipped per-unit gradients (e.g. per-sample for sample-level DP) from client $i$, to have the update 
\begin{equation}
    \label{eq:single-step-gaussian-example}
    \theta_t = \theta_{t-1} - \frac{1}{N} TA \left( \sum_{i=1}^N \gamma ( g_i^{(t)} + \xi^{(t)}_i) \right) .
\end{equation}
Looking at the sum in \Eqref{eq:single-step-gaussian-example}, since each per-unit gradient has a common bounded norm and Gaussian noise is infinitely divisible, i.e., the summed-up noise is another Gaussian, we can calculate the resulting privacy with standard techniques (see, e.g., \citealt{Mironov_Talwar_Zhang_2019, Koskela2020-ol, Zhu2021-kb}). 
Now, if one tries to use the same reasoning with sample-level DP using $S > 1$ local optimization steps, the problem is that the sensitivity of the per-sample clipped gradients when summed over the local steps increases with $S$: assuming $\| g_{i,s} \|_2 \leq C, s=1,\dots,S$ implies that $\| \sum_{s=1}^S g_{i,s} \|_2 \leq SC $ (triangle-inequality).%
\footnote{Note that, as already mentioned in Section~\ref{sec:related_work}, for the special case of Gaussian noise with a trusted server, the analysis of \cite{Noble_Bellet_Dieuleveut_2023} shows that we can indeed avoid the problem and get valid DDP guarantees that improve on the LDP case.}

In other words, trying to scale the noise over multiple local optimization steps naively ends up scaling the query sensitivity linearly with the total number of steps, while the obvious problem in using only a single step per FL round is that the number of communication rounds is typically one of the main bottlenecks in FL \citep{Kairouz_et_al_2019}. 


\subsection{Trust Model}

In this paper, we assume an honest-but-curious (hbc) server and that all the clients are fully honest. The latter assumption can be easily generalised to allow for hbc clients with some weakening to the relevant privacy bounds: with $N$ (non-colluding) hbc clients, since any client could potentially remove its own noise from the aggregated results, the noise from the other $N-1$ clients needs to guarantee the target DP level. In effect, to allow for all hbc clients, we would need to scale up the noise level somewhat (see, e.g., \citealt{Heikkila_2017} for a discussion on the noise scaling and for formal proofs).

In principle, the same technique can also protect against privacy threats in the case of including some fully malicious clients in the protocol (i.e, simply scale the noise so that the hbc clients are enough to guarantee the required DP level). However, in this case the required level of extra noise will increase quickly with the number of malicious clients leading to heavier utility loss. With malicious clients, there would also be no guarantee that the learning algorithm terminates properly.


\section{Privacy Accounting for Multiple Local Steps Using a Trusted Aggregator}
\label{sec:local_steps_with_secsum}

Consider standard FL setting with $M$ clients and client $i$ holding some local data $\data_i$. On FL round $t$, $N_t$ clients are selected for updating by the server, w.l.o.g. assumed to be clients $i=1,\dots,N_t$. Each selected client $i$ receives the current model parameters $\theta^{(t-1)}$ from the server, then runs $S_t$ local optimization steps using DP-SGD with constant learning rate $\gamma_t$, and finally sends an update to the server via a trusted aggregator $TA$:
\begin{equation}
    \Delta_i^{(t)} 
    = \theta_i^{(t)} - \theta^{(t-1)} 
    = - \sum_{s=1}^{S_t} \gamma_t ( g^{(t)}_{i,s} + \xi^{(t)}_{i,s} ), 
\end{equation}
where we write $g^{(t)}_{i,s}$ for the per-unit clipped gradients of client $i$ at local step $s$, and $\xi^{(t)}_{i,s}$ for the DP noise. 
After receiving all the messages via the trusted aggregator, the server updates the global model using FedAvg:
\begin{equation}
    \theta^{(t)} = \theta^{(t-1)} + \frac{1}{N_t} TA(\sum_{i=1}^{N_t} \Delta_i^{(t)} ).
\end{equation}

In the rest of this section we state our main results: we show that under some assumptions we can account for privacy in FL by looking at the local optimization steps while benefiting from the noise added by all the clients on each step, even if there is no communication between the clients during the local optimization but only a single trusted aggregation at the end of the round to update the global model parameters.

W.l.o.g. from now on we drop the FL round index $t$ and simply write, e.g., $N$ instead of $N_t$ for the number of updating clients. Since the global updates do not access any sensitive data, once we can do privacy accounting for a single FL round, which is the main topic in the rest of this section, generalising to $T$ FL rounds can be done in a straightforward manner (see Appendix~\ref{sec:accounting_details}).

In the following, we assume that all clients have access to an ideal trusted aggregator, and that all sums are calculated by calling the trusted aggregator. We comment on more realistic implementations in Appendix~\ref{sec:SecAgg_in_practice} after stating our main results.

We make the following assumptions throughout this section:
\begin{assumption}
\label{assumption:common_assumptions}
    Let $\data_i \in \dataSpace^*, i=1,\dots,N$. 
    We write $\data = \cup_{i=1}^N \data_i$, and assume that $\data_i \cap \data_j = \varnothing$ for every $i\not=j$, i.e., there are no overlapping samples in different clients' local data sets. 
    We are interested in fixed-length optimization runs of $S$ local steps (common to all clients), which leads to (fixed-length) adaptive sequential composition for privacy accounting (see e.g. \citealt{Rogers_Roth_Ullman_Vadhan_2016,Zhu2021-kb}). 
    We assume all clients use the same learning rate $\gamma$ and norm clipping with constant $C$ when applicable. 
    We also assume that all local DP mechanisms $\gA^{(s)}_{i}, s=1,\dots,S,i=1,\dots,N$ are DP w.r.t. the first argument for any given auxiliary values (which we generally do not write out explicitly).
\end{assumption}
Note that we consider how to loosen many of these assumptions in Appendix~\ref{sec:loosening_assumptions}.

Not all possible DP mechanisms might allow for improved DDP guarantees via simple aggregation. 
For convenience, in Definition~\ref{def:sum-dominating-mechanism} we define a family of suitable mechanisms, which we call sum-dominating:
\begin{defn}[Sum-dominating mechanism]
\label{def:sum-dominating-mechanism}
    Let $\gA, \gA_{i}:\dataSpace^*\rightarrow \mathcal O, i=1,\dots,N$ be randomised algorithms. We call $\gA$ a \emph{sum-dominating} mechanism w.r.t. $\gA_i, i=1,\dots,N$, if 
    \begin{equation}
    \label{eq:sum-dominating-def-eq}
         \sup_{\data\simeq \data'} H_{\alpha} \left( \sum_{i=1}^N \gA_{i}(\data_i) \| \sum_{i=1}^N \gA_{i}(\data'_i) \right) 
        \leq 
        \sup_{\data\simeq \data'} H_{\alpha} \left( \gA(\data) \| \gA(\data') \right) ,
    \end{equation}
    where $H_{\alpha}$ is the hockey-stick divergence, and $\simeq$ is the DP neighbourhood relation. 
    If additionally $\sup_{\data\simeq \data'} H_{\alpha} \left( \gA(\data) \| \gA(\data') \right) < \max_i \sup_{\data\simeq \data'} H_{\alpha} \left( \gA_i(\data_i) \| \gA_i(\data_i') \right)$, then we call $\gA$ a \emph{properly sum-dominating} mechanism.
\end{defn}

Looking at Definition~\ref{def:sum-dominating-mechanism}, in practice the interesting mechanisms for us will be the properly sum-dominating mechanisms, as these have improved privacy bounds compared to any single mechanism $\gA_i$ in isolation. However, from the following results only Theorem~\ref{thm:per-client-gaussian-noise-scaling} is specific to proper sum-dominating mechanisms. 
Considering concrete mechanisms that are properly sum-dominating, simple examples are given by the existing DP mechanisms that use infinitely divisible noise (see Definition~\ref{def:infinite_divisibility}), such as 
Skellam \citep{Valovich_Alda_2017,Agarwal_Kairouz_Liu_2021}, Poisson-binomial \citep{Chen_et_al_2022}, as well as the ubiquitous continuous Gaussian \citep{dwork2006our}:%
\footnote{Discrete Gaussian \citep{Canonne_Kamath_Steinke_2020} is not infinitely divisible, but is close-enough that a properly sum-dominating mechanism can still be found in many practical settings, see \cite{Kairouz_Liu_Steinke_2021}. In such cases, the inequality in Equation~\ref{eq:sum-dominating-def-eq} could always be strict, whereas for the infinitely divisible noise mechanisms mentioned in the text it can be written as an equality. We note that even in this case, however, writing Equation~\ref{eq:sum-dominating-def-eq} with inequality is necessary to avoid nonsensical limitations, such as having a 
DP mechanism that satisfies Definition~\ref{def:sum-dominating-mechanism} with a given $\delta$ while not satisfying it for any $\delta' > \delta$.}\\

\begin{example}[Gaussian mechanism]
\label{example:gaussian_mechanism}
    Assume $\gA_{i}$ is a Gaussian mechanism with noise covariance $C^2 \sigma_i^2 \cdot I_d$ and $f$ has bounded sensitivity $C$. 
    Since the normal distribution is infinitely divisible (Definition~\ref{def:infinite_divisibility}), the combined mechanism $\gA = \sum_{i=1}^N \gA_{i}$ is another Gaussian with sensitivity $C$ and noise covariance $C^2 (\sum_{i=1}^N \sigma^2_i) \cdot I_d$. 
    Finally, due to well-known existing results (see e.g. \citealt{Meiser_Mohammadi_2018, Koskela2020-ol, Zhu2021-kb}), 
    a (tightly) dominating pair of distributions $(P,Q)$ in the sense of Definition~\ref{def:dominating_pairs} for the sum-dominating mechanism $\gA$ is given by a pair of 1d Gaussians with means $\mu_P = 0, \mu_Q =1$, and variances $\sigma^2_P = \sigma^2_Q = \sum_{i=1}^N \sigma^2_i$. As the DP guarantees for the Gaussian improve when increasing the variance \citep{BalleW18}, $\gA$ is a properly sum-dominating mechanism w.r.t. to $\gA_i, i=1,\dots,N$ (Definition~\ref{def:sum-dominating-mechanism}).
\end{example}

Next, we consider composing a sum-dominating mechanisms over $S$ local steps. This allows us to account for the total privacy when doing more than one local optimization step: 
\begin{lem}
\label{lemma:dominating_compositions}
    Assume $\gA^{(s)}$ is a sum-dominating mechanism w.r.t. $\gA_i^{(s)}, i=1,\dots,N$  
    for every $s=1,\dots,S$.
    Then the composition of the sum-dominating mechanisms 
    $(\gA^{(1)}, \dots, \gA^{(S)})$ 
    dominates the composition 
    \begin{equation}
        \left( \sum_{i=1}^N \gA_{i}^{(1)}, \dots, \sum_{i=1}^N \gA_{i}^{(S)} \right) .
    \end{equation}
\end{lem}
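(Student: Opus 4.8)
The plan is to reduce the statement to the adaptive composition theorem for dominating pairs \cite{Zhu2021-kb}, so that the argument splits into a per-step step and a composition step. First I would fix, for each $s=1,\dots,S$, a dominating pair $(P_s,Q_s)$ for the sum-dominating mechanism $\gA^{(s)}$ in the sense of Definition~\ref{def:dominating_pairs}; for the additive infinitely divisible mechanisms of interest such a pair is explicit (e.g. the one-dimensional Gaussian pair of Example~\ref{example:gaussian_mechanism}). The key per-step observation is that this same pair already dominates the aggregated mechanism $\mathcal B^{(s)} := \sum_{i=1}^N \gA_i^{(s)}$: chaining the sum-dominating inequality of Definition~\ref{def:sum-dominating-mechanism} with the domination of $\gA^{(s)}$ gives, for every $\alpha \ge 0$,
\[
\sup_{\data\simeq\data'} H_{\alpha}\!\left(\mathcal B^{(s)}(\data)\,\|\,\mathcal B^{(s)}(\data')\right)
\le \sup_{\data\simeq\data'} H_{\alpha}\!\left(\gA^{(s)}(\data)\,\|\,\gA^{(s)}(\data')\right)
\le H_{\alpha}(P_s\,\|\,Q_s),
\]
so $(P_s,Q_s)$ is a dominating pair for $\mathcal B^{(s)}$ as well. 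This step uses no tightness, only the definition of a dominating pair and transitivity of the two inequalities.

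Next I would invoke the composition theorem for dominating pairs \cite{Zhu2021-kb}. By Assumption~\ref{assumption:common_assumptions} each local mechanism is DP in its data argument uniformly over the auxiliary inputs (the current model parameters and any previously released outputs), and the mechanism family together with the clipping constant $C$ and learning rate $\gamma$ is the same at every step; hence $(P_s,Q_s)$ dominates $\mathcal B^{(s)}$ in the adaptive sense the theorem requires. The theorem then gives that the product pair $\big(\bigotimes_{s=1}^S P_s,\ \bigotimes_{s=1}^S Q_s\big)$ dominates the adaptive composition $\big(\mathcal B^{(1)},\dots,\mathcal B^{(S)}\big)$. Applying the same theorem to $\gA^{(1)},\dots,\gA^{(S)}$ shows that this identical product pair is exactly the dominating pair one obtains for the composition $(\gA^{(1)},\dots,\gA^{(S)})$. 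Thus the dominating pair computed for the composition of the sum-dominating mechanisms is simultaneously a valid dominating pair for the composition of the aggregated mechanisms, which is the claimed domination.

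I expect the main obstacle to be the careful verification of the composition theorem's hypotheses rather than any computation: one must confirm that the per-step sum-domination holds uniformly over all auxiliary inputs, so that the per-step dominating pair can be chosen independently of the realised outputs, and that the same neighbouring pair $\data\simeq\data'$ indexes every step of both compositions. Assumption~\ref{assumption:common_assumptions} --- the disjointness of the clients' data, the fixed-length adaptive composition model, and the DP-in-first-argument-for-any-auxiliary-value condition --- is what licenses this, and I would make each of these invocations explicit. A secondary point worth stating clearly is that ``dominates'' is meant here at the level of dominating pairs: the product pair that dominates $(\gA^{(1)},\dots,\gA^{(S)})$ also dominates $(\mathcal B^{(1)},\dots,\mathcal B^{(S)})$, which is precisely what the downstream privacy accounting needs.
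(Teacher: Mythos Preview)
Your proposal is correct and follows essentially the same route as the paper: both arguments note the per-step hockey-stick inequality coming straight from Definition~\ref{def:sum-dominating-mechanism} and then invoke the adaptive-composition result for dominating pairs from \cite{Zhu2021-kb} (the paper cites it as their Theorem~10). The only cosmetic difference is that you pass through an explicit intermediate dominating pair $(P_s,Q_s)$ for each step and then take the product, whereas the paper applies the composition theorem directly to the mechanisms; your version is a slightly more detailed unpacking of the same argument, not a different one.
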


\begin{proof}
    For any $s \in \{1,\dots,S\}$, we immediately have 
    \begin{equation}
         \sup_{\data\simeq \data'} H_{\alpha} \left( \sum_{i=1}^N \gA^{(s)}_{i}(\data_i) \| \sum_{i=1}^N \gA^{(s)}_{i}(\data'_i) \right) 
        \leq  \sup_{\data\simeq \data'} H_{\alpha} \left( \gA^{(s)}(\data) \| \gA^{(s)}(\data') \right) 
    \end{equation}
    by definition of $\gA$ (Definition~\ref{def:sum-dominating-mechanism}). 
    The claim therefore follows immediately from \cite[Theorem~10]{Zhu2021-kb}.
\end{proof}

Considering Lemma~\ref{lemma:dominating_compositions}, 
in our case it essentially says that to account for running $S$ local optimization steps, it is enough to find a sum-dominating mechanism for each step separately.

With the next result given in Lemma~\ref{lemma:from_vector_to_exchanged_sums}, we  can connect the previous results with the form of output we get from actually running local optimization in FL:
\begin{lem}
\label{lemma:from_vector_to_exchanged_sums}
    Assume that releasing the vector 
    \begin{align}
        & \left( \sum_{i=1}^N \gA_{i}^{(1)} (\data_i), \dots, \sum_{i=1}^N \gA_{i}^{(S)} (\data_i) \right) 
    \end{align} 
    satisfies $(\varepsilon,\delta)$-DP. 
    Then releasing 
    \begin{equation}
        \sum_{i=1}^N \sum_{s=1}^S \gA_{i}^{(s)} (\data_i) 
    \end{equation}
    also satisfies $(\varepsilon,\delta)$-DP.
\end{lem}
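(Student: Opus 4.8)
The plan is to recognise the fully-summed release as a deterministic post-processing of the vector release, and then to invoke the post-processing property of DP in its hockey-stick formulation. First I would observe, by linearity of summation, that
\begin{equation}
    \sum_{i=1}^N \sum_{s=1}^S \gA_{i}^{(s)}(\data_i) = \sum_{s=1}^S V_s, \qquad V_s := \sum_{i=1}^N \gA_{i}^{(s)}(\data_i),
\end{equation}
where $V_s$ is precisely the $s$-th coordinate of the vector assumed to be $(\varepsilon,\delta)$-DP. Hence the total sum is the image of that vector under the fixed map $g(v_1,\dots,v_S) := \sum_{s=1}^S v_s$, applied to the very same realisation of the underlying noise; in particular no fresh randomness is introduced, so the total sum is genuinely a (deterministic) post-processing of the vector rather than an independent release.

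Next I would invoke the data-processing inequality for the hockey-stick divergence. Writing $P_{\data}$ for the law of the vector $(V_1,\dots,V_S)$ on input $\data$, and $g_\ast P_{\data}$ for the law of $g$ applied to a sample from $P_{\data}$, the fact that $H_{\alpha}$ is an $f$-divergence gives $H_{\alpha}(g_\ast P_{\data} \,\|\, g_\ast P_{\data'}) \le H_{\alpha}(P_{\data} \,\|\, P_{\data'})$ for every $\alpha \ge 0$ and every neighbouring pair $\data \simeq \data'$. Taking $\alpha = \ee^{\varepsilon}$ and using the equivalence between ADP and a hockey-stick bound recalled after Definition~\ref{def:hockey-stick-div}, the assumed $(\varepsilon,\delta)$-DP of the vector means $\sup_{\data\simeq\data'} H_{\ee^{\varepsilon}}(P_{\data}\,\|\,P_{\data'}) \le \delta$; combining this with the displayed inequality yields $\sup_{\data\simeq\data'} H_{\ee^{\varepsilon}}(g_\ast P_{\data}\,\|\,g_\ast P_{\data'}) \le \delta$, which is exactly the statement that $\sum_{i=1}^N\sum_{s=1}^S \gA_{i}^{(s)}(\data_i)$ is $(\varepsilon,\delta)$-DP.

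I expect no genuine obstacle here: the only point that warrants care is the observation that the summation shares the same randomness as the vector, so that the released sum really is a post-processing of the released vector and the data-processing inequality applies to the same joint law (capturing any correlations across the $S$ steps). Once that is granted, the conclusion is an immediate consequence of post-processing, and the identical argument would work for any DP characterisation based on a divergence that satisfies a data-processing inequality.
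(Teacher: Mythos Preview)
Your proposal is correct and follows essentially the same route as the paper's proof: both recognise the double sum as a deterministic post-processing of the $(\varepsilon,\delta)$-DP vector release (summing its coordinates, which by linearity is the same as exchanging the order of summation) and then invoke the post-processing immunity of DP. Your version is a touch more explicit in grounding post-processing via the data-processing inequality for the hockey-stick divergence, whereas the paper simply cites the property and adds a remark on auxiliary values relevant to the FL context, but the underlying argument is identical.
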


\begin{proof}
    Due to the post-processing immunity of DP (see, e.g., \citealt{Dwork2014-do}), the assumption implies that releasing 
    \begin{align}
        \sum_{s=1}^S \sum_{i=1}^N \gA_{i}^{(s)} (\data_i) 
    \end{align}
    satisfies $(\varepsilon,\delta)$-DP, 
    and by exchanging the order of summation 
    the claim follows. Note that all the mechanisms are assumed to be DP w.r.t. their first argument for any given auxiliary value, which allows us to do the exchange without affecting privacy (in the context of FL, we effectively switch from communicating between each local step to 
    running all local steps and then communicating).
\end{proof}

Taken together, Definition~\ref{def:sum-dominating-mechanism} along with Lemmas~\ref{lemma:dominating_compositions} \& \ref{lemma:from_vector_to_exchanged_sums} 
allow us to compose DP mechanisms for DDP guarantees using LDP noise from multiple clients. In our main result given as Theorem~\ref{thm:dpsgd_with_local_steps}, we show that with (properly) sum-dominating mechanisms each client can run DP-SGD with several local steps and LDP noise while receiving (improved) DDP guarantees when communicating the update via a trusted aggregator.

\begin{thm}
    \label{thm:dpsgd_with_local_steps}
    Assume $N$ clients use local noise mechanisms $\gA_i^{(s)}, i=1,\dots,N$ as in Definition~\ref{def:sum-dominating-mechanism} for each local gradient optimization step $s=1,\dots,S$, 
    and that the final aggregated update $\sum_{i=1}^N \Delta_i$ is released via an ideal trusted aggregator. 
    Then denoting the sum-dominating mechanism for step $s$ by $\gA^{(s)}$, the query release satisfies $(\veps(\delta),\delta)$-DP for any $\delta \in [0,1]$, when $\veps(\delta)$ is given by accounting for releasing the vector 
    $$ \left( \gA^{(1)} (\data), \dots, \gA^{(S)} (\data) \right) ,$$
    where $\data = \cup_{i=1}^N \data_i$.
\end{thm}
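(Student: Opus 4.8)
The plan is to assemble the three preceding lemmas into a single chain of reductions that carries the privacy guarantee from the idealised per-step release down to the single aggregated update actually communicated in FL. The starting point is the observation that, by assumption, each local mechanism $\gA_i^{(s)}$ is additive with infinitely divisible noise, so Lemma~\ref{lemma:additive_noise_mechanisms} immediately certifies that $\gA^{(s)} = \sum_{i=1}^N \gA_i^{(s)}$ is a sum-dominating mechanism for $\gA_i^{(s)}, i=1,\dots,N$, at each step $s$. Concretely, in this additive case the jointly-scaled-noise mechanism $\gA^{(s)}(\data)$ and the client-wise sum $\sum_{i=1}^N \gA_i^{(s)}(\data_i)$ coincide in distribution, so the sum-dominating inequality of Definition~\ref{def:sum-dominating-mechanism} in fact holds with equality; this is the ingredient that lets us replace the vector of client-wise sums by the single dominating mechanism at every step.

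Next I would invoke Lemma~\ref{lemma:dominating_compositions} to lift this per-step domination to the level of the $S$-fold adaptive composition: the composition $(\gA^{(1)},\dots,\gA^{(S)})$ dominates $(\sum_i \gA_i^{(1)},\dots,\sum_i \gA_i^{(S)})$. Since accounting for the vector $(\gA^{(1)}(\data),\dots,\gA^{(S)}(\data))$ is exactly what defines $\veps(\delta)$, and since $(\veps,\delta)$-DP is equivalent to a bound on the hockey-stick divergence $H_{\ee^{\veps}}$, domination transfers the guarantee verbatim: releasing the vector $(\sum_i \gA_i^{(1)}(\data_i),\dots,\sum_i \gA_i^{(S)}(\data_i))$ also satisfies $(\veps(\delta),\delta)$-DP for every $\delta \in [0,1]$.

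From here I would apply Lemma~\ref{lemma:from_vector_to_exchanged_sums} to collapse the released vector into a single sum: because the vector release is $(\veps(\delta),\delta)$-DP, so is $\sum_{i=1}^N \sum_{s=1}^S \gA_i^{(s)}(\data_i)$. Finally, the quantity actually sent through the trusted aggregator is $\sum_{i=1}^N \Delta_i = -\gamma \sum_{i=1}^N \sum_{s=1}^S \gA_i^{(s)}(\data_i)$, a deterministic linear function of that sum, so post-processing immunity of DP \cite{Dwork2014-do} yields the claimed $(\veps(\delta),\delta)$-DP for the query release.

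The conceptually delicate step — and the one the whole argument hinges on — is the summation-order exchange underlying Lemma~\ref{lemma:from_vector_to_exchanged_sums}, rather than any of the routine divergence manipulations. Reordering $\sum_s \sum_i$ into $\sum_i \sum_s$ is precisely what converts the idealised protocol, in which clients would aggregate after every local step, into the realistic protocol, in which each client runs all $S$ steps locally and aggregates only once. This is legitimate because each $\gA_i^{(s)}$ is assumed DP in its first argument for every fixed auxiliary value, so the adaptive dependence of later local steps on earlier local state does not break the composition accounting; I would take care to state explicitly that this part of Assumption~\ref{assumption:common_assumptions}, together with the fixed-length adaptive composition framework of \cite{Zhu2021-kb}, is what licenses the exchange without loss in the privacy bound.
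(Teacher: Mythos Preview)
Your proposal is correct and follows essentially the same approach as the paper: both arguments chain Lemma~\ref{lemma:additive_noise_mechanisms}, Lemma~\ref{lemma:dominating_compositions}, and Lemma~\ref{lemma:from_vector_to_exchanged_sums} in the same logical order, with the only cosmetic difference being that you present the chain forward (from the accounting vector down to the released sum, finishing with an explicit post-processing step for the factor $-\gamma$) whereas the paper presents it starting from the released sum and reducing back to the vector. Your added commentary on the summation-order exchange and the role of the auxiliary-value assumption mirrors the justification already embedded in the proof of Lemma~\ref{lemma:from_vector_to_exchanged_sums}.
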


\begin{proof}

    For privacy accounting, assuming all sums are done by trusted aggregator $TA$, releasing the aggregated update $TA(\sum_{i=1}^N \Delta_i)$ 
    corresponds to releasing the result 
    $$ -\gamma \sum_{i=1}^N \sum_{s=1}^S \gA^{(s)}_{i} (\data_i; z_{i,s}) , $$
    where each mechanism includes a mapping that maps the local samples to the clipped per-unit gradients as well as the DP noise, and $z_{i,s}$ are auxiliary values (e.g., state after the previous step).%
    \footnote{For example, with standard DP-SGD, sample-level DP and continuous Gaussian noise, 
    $ \sum_{i=1}^N \Delta_i = - \gamma \sum_{i=1}^N \sum_{s=1}^S (g_{i,s} + \xi_{i,s})$, 
    where $g_{i,s}$ are (sums of) clipped per-sample gradients and $\xi_{i,s}$ are the per-step Gaussian noises.} 
    Since all mechanisms are assumed to be DP w.r.t. the first argument for any auxiliary value, the auxiliary values do not affect the DP guarantees, and hence we do not write them explicitly in the following.
    
    From Lemma~\ref{lemma:from_vector_to_exchanged_sums} 
    it follows that valid DP guarantees can be established by accounting for the release of the vector $\left( \sum_{i=1}^N \gA_i^{(1)} (\data_i), \dots, \sum_{i=1}^N \gA_i^{(S)} (\data_i) \right)$. Furthermore, Definition~\ref{def:sum-dominating-mechanism} 
    implies that for any step $s \in {1,\dots,S}$, the sum-dominating mechanism $\gA^{(s)}$ dominates $\sum_{i=1}^N \gA_i^{(s)}$, and therefore by  
    Lemma~\ref{lemma:dominating_compositions} the claim follows.
\end{proof}

Considering properly sum-dominating mechanisms based on infinitely divisible noise, we also have the following theorem on scaling the noise with multiple clients:
\begin{thm}
\label{thm:per-client-gaussian-noise-scaling}
In addition to the assumptions of Theorem~\ref{thm:dpsgd_with_local_steps}, assume that each client uses a common noise level $\sigma_{LDP}$ for some mechanism based on infinitely divisible additive noise for each local step s.t. the DP noise $\xi_{LDP}$ for any given local mechanism $\gA_i^{(s)}, i=1,\dots,N, s=1,\dots,S$, has covariance $\sigma_{LDP}^2 \Sigma_d$ for some $\Sigma_d$. 
Then for guaranteeing given $(\varepsilon(\delta),\delta)$-DDP, the required $\sigma_{LDP}$ scales as $\gO(\frac{1}{\sqrt{N}})$.
\end{thm}

\begin{proof}
    Let $\sigma^2 \Sigma$ be such that when accounting for the total of $S$ local steps, the composition of the sum-dominating mechanisms $\left(\gA^{(1)},\dots,\gA^{(S)} \right)$ is $(\varepsilon(\delta),\delta)$-DDP. 
    From Definition~\ref{def:infinite_divisibility} 
    it immediately follows that the required total noise can be broken into parts while staying in the same noise family to give 
    $$\sigma^2 = \sum_{i=1}^N \sigma_{LDP}^2 
    \Leftrightarrow \sigma_{LDP}^2 = \frac{\sigma^2}{N} 
    \Leftrightarrow \sigma_{LDP} = \frac{\sigma}{\sqrt{N}} .$$
\end{proof}

We note that similar scaling results as given in Theorem~\ref{thm:per-client-gaussian-noise-scaling} for specific cases like the Gaussian mechanism are already well-known in the existing literature (see, e.g., \citealt{Noble_Bellet_Dieuleveut_2023, Heikkila_2017}).

Finally, considering tightness of the privacy accounting done based on Theorem~\ref{thm:dpsgd_with_local_steps}, 
it is worth noting that 
since the accounting relies on Lemma~\ref{lemma:from_vector_to_exchanged_sums}, which assumes releasing each local step while the actually released query answer is a sum over the local steps, the resulting privacy bound need not be tight but an upper bound on the privacy budget. However, this matches the usual DP-SGD privacy accounting analysis (see e.g. \citealt{Mironov_Talwar_Zhang_2019,Koskela2020-ol}), which typically needs to account for each local optimization step due to technical reasons even if only the final model is released. In the general case, 
it has also been shown that hiding the intermediate steps does not bring any privacy benefits compared to the per-step accounting \citep{Annamalai_2024}.


\section{Experiments}
\label{sec:experiments}

\paragraph{Setup and Motivation:}

Our chosen settings try to mimic a typical cross-silo FL setup: there are a limited number of clients, each having a smallish local database. The clients have enough local compute to run optimization on the chosen model, while the number of server-client communications required for updating the global model are the main bottle-neck. Note that this bottleneck will emerge even with larger actual organisations training models with broadband connections, when the model size is large-enough, e.g., when training foundation models \citep{Bommasani2021FoundationModels}. This is especially true when using SecAgg protocols, since the cost of running a real SecAgg algorithm presents significant compute and communication overheads even with the efficient protocols discussed in Appendix~\ref{sec:SecAgg_in_practice}. 
In this setting, it makes sense to try and push more optimization steps to the clients while reducing the number of global updates (FL rounds). We also assume that the clients send their local updates via some trusted aggregator (which we only assume and do not implement in practice in the experiments. However, we do use only discrete DP mechanisms compatible with standard SecAgg algorithms in all the experiments). 
For more details on all the experiments, see Appendix~\ref{sec:experimental_details}. 
The code for reproducing the results will be released on GitHub.%
\footnote{\url{https://github.com/mixheikk/DPFL-with-SecAgg-paper}}


\paragraph{CNN on Fashion-MNIST:}

We first train a small convolutional neural network (CNN) on Fashion MNIST data \citep{xiao2017_fashionMNIST}, that is distributed iid among 10 clients. We use the CNN architecture introduced by \citet{Papernot_Thakurta_Song_Chien_Erlingsson_2020, Tramer_Boneh_2020}. 
Figure~\ref{fig:fashion_mnist_skellam} shows the mean with standard error of the mean (SEM) over 5 repeats for test accuracy and loss with DP-SGD using Skellam noise \citep{Agarwal_Kairouz_Liu_2021} with 32 bits gradient quantization,.i.e., without quantization. We train the model for 20 FL rounds and varying number of local steps. Comparing the results for $1$ local step as opposed to $1$ local epoch ($\simeq 11$ steps, but with different sampling fraction compared to baseline), it is evident that being able to take more local optimization steps (as allowed by Theorem~\ref{thm:dpsgd_with_local_steps}) brings considerable utility benefits under fixed privacy and communication budgets.

\begin{figure}[htb]
\centering
\begin{subfigure}[b]{.5\linewidth}
\centering
\centerline{\includegraphics[width=\linewidth,trim={1cm 1cm 1cm 1cm},clip]{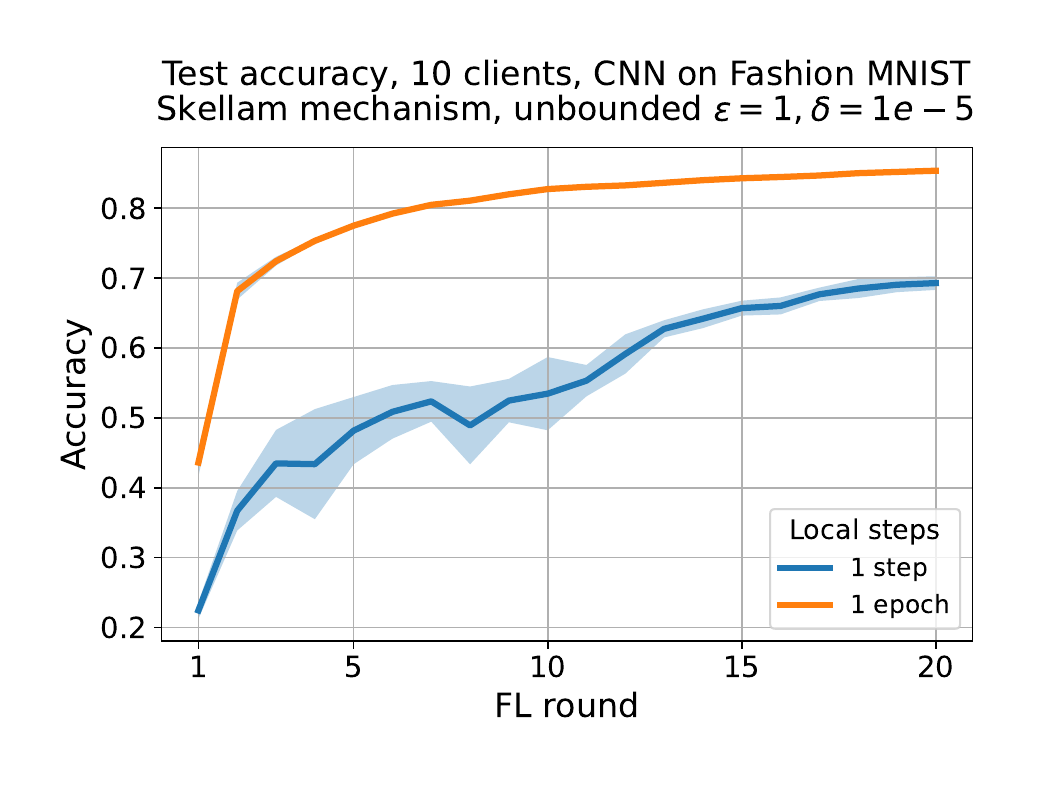}}
\caption{Test accuracy}
\label{fig:fashion_mnist_skellam_acc}
\end{subfigure}
\hfill
\begin{subfigure}[b]{.48\linewidth}
\centering
\centerline{\includegraphics[width=\linewidth,trim={1cm 1cm 1cm 1cm},clip]{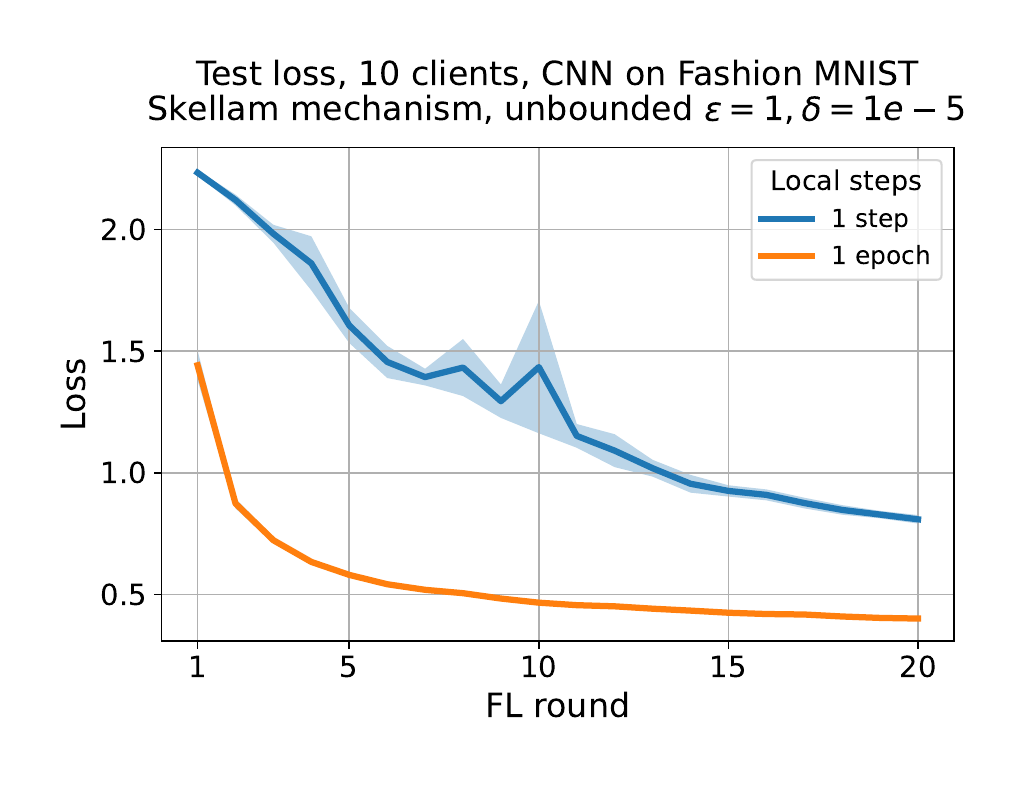}}
\caption{Test loss}
\label{fig:fashion_mnist_skellam_loss}
\end{subfigure}
\caption{CNN on Fashion-MNIST, 10 clients, mean and SEM over 5 seeds. Running more local steps is clearly beneficial.
\label{fig:fashion_mnist_skellam}}
\vspace{-1em}
\end{figure}


\paragraph{Linear Classifier on Transformed CIFAR-10:}

Overall, assuming a fixed privacy budget, we might expect the benefits from being able to run more local steps to be more accentuated with more complex models and very limited communication budget, while for simple-enough models and more FL rounds even a few local steps could lead to good results. To test to what extent this is true for simple yet still useful models, we consider CIFAR-10 data \citep{Krizhevsky_2009}. Similar to \cite{Tramer_Boneh_2020}, we take a ResNeXt-29 model \citep{Xie_2017} pre-trained with CIFAR-100 data \citep{Krizhevsky_2009}, remove the final classifier, and use it as a feature extractor to transform the input data. We distribute the transformed CIFAR-10 data iid to 10 clients, and train linear classification layers from scratch for $10,20,40,80$ and $160$ FL rounds using DP-SGD with Skellam noise, 32 bit gradient quantization, and more local steps (1 epoch $\simeq 19$ steps, but with different expected batch size compared to the baseline). 
The mean and SEM over 5 seeds of the best results for each model over the training run are shown in Figure~\ref{fig:cifar10_pretrained_skellam}. 
The benefits of being able to run more than a single local steps are again clear; even with the relatively simple linear model, using $1$ local step needs roughly an order of magnitude more FL rounds over a fairly broad range of available communication budgets to reach a similar performance compared to using $1$ local epoch.

\begin{figure}[htb]
\centering
\begin{subfigure}[b]{.49\linewidth}
\centering
\centerline{\includegraphics[width=\linewidth,trim={1cm 1cm 1cm 1cm},clip]{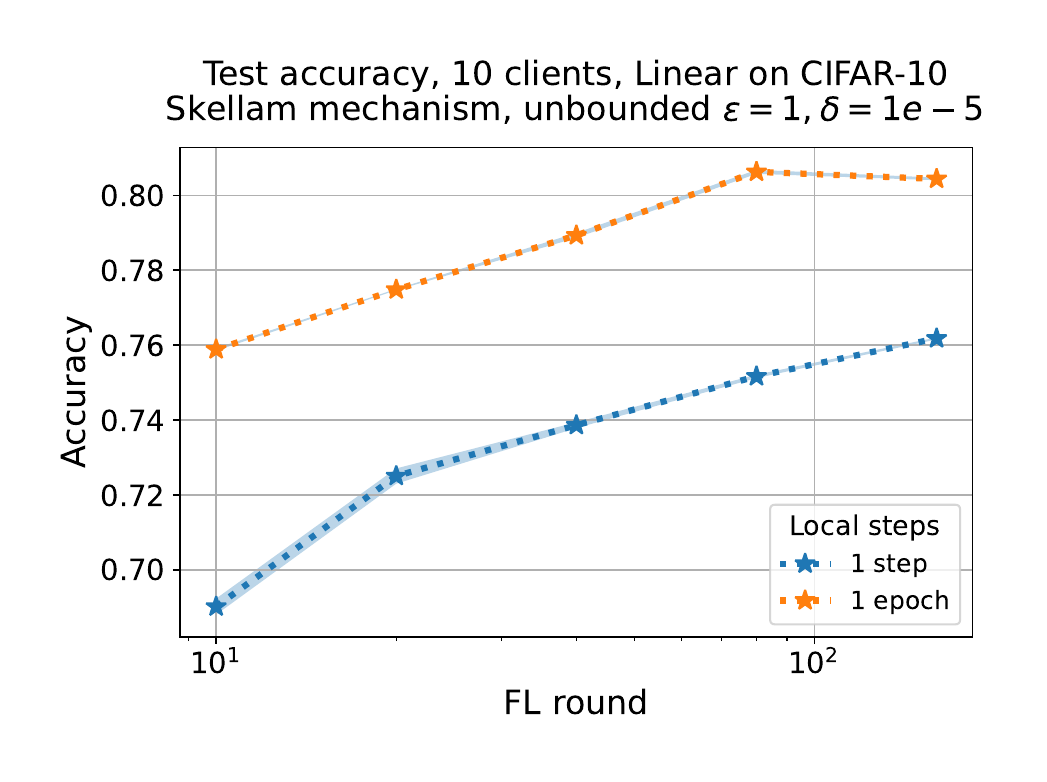}}
\caption{Test accuracy}
\label{fig:cifar10_pretrained_skellam_acc}
\end{subfigure}
\hfill
\begin{subfigure}[b]{.49\linewidth}
\centering
\centerline{\includegraphics[width=\linewidth,trim={1cm 1cm 1cm 1cm},clip]{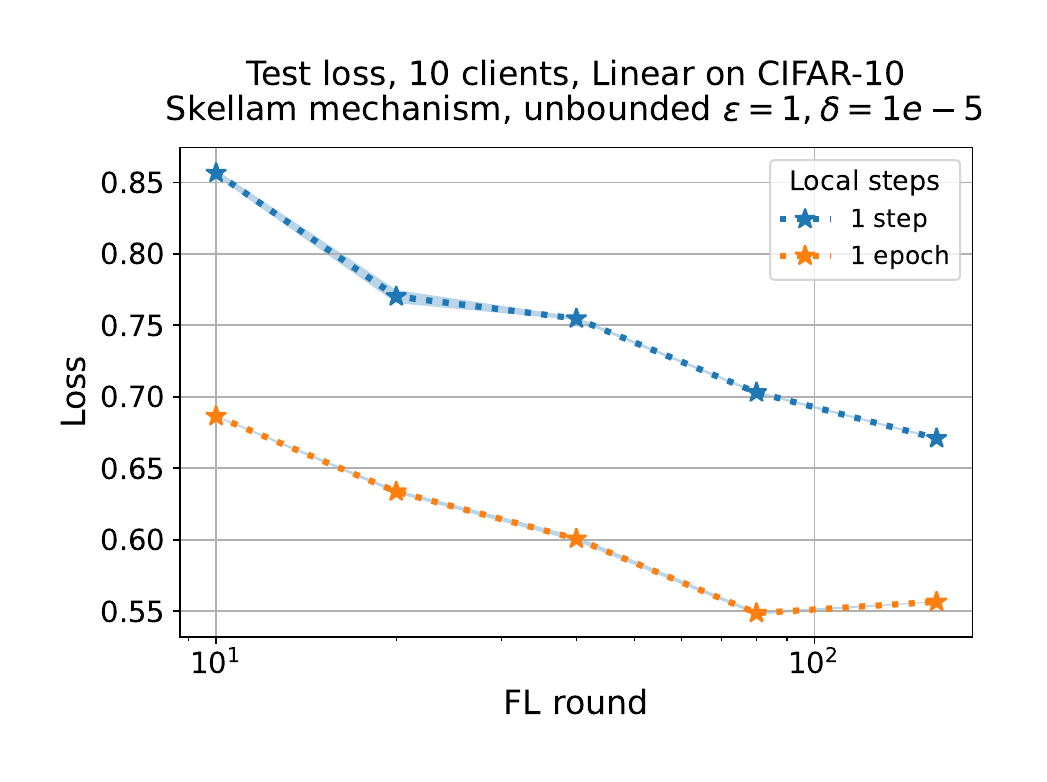}}
\caption{Test loss}
\label{fig:cifar10_pretrained_skellam_loss}
\end{subfigure}
\caption{Mean and SEM over 5 seeds of the best performance over training runs for Linear models on CIFAR-10 using pre-trained ResNeXt29 as feature extractor for varying number of FL rounds, 10 clients. Running more local steps is clearly beneficial.
\label{fig:cifar10_pretrained_skellam}}
\vspace{-1em}
\end{figure}


\paragraph{Logistic Model on Income:}

To further test the robustness of the possible benefits from being able to run more than a single local optimization step, we train a simple Logistic Neural Network (LNN) model (i.e., 1-layer fully connected linear classification network similar to the one used in the previous experiment, but without any pre-trained feature extractor) on ACS Income data \citep{DBLP:conf/nips/DingHMS21}. Unlike the synthetic iid data splits used in the previous experiments, Income data has an inherent client split corresponding to $51$ states from where the data has been collected. Since the inherent split is heterogeneous (different states have very different number of samples as well as different data distributions), we would expect the benefits of doing more local optimization steps between global communication rounds to dwindle, since the local models from different clients could diverge when only trained locally. However, as shown in Figure~\ref{fig:income_skellam}, even in this setting taking more local steps can be very beneficial (here $1$ epoch $\simeq 20$ steps with same local sampling fraction compared to baseline). This clearly demonstrates the utility of our analysis.

\begin{figure}[htb]
\centering
\begin{subfigure}[b]{.51\linewidth}
\centering
\centerline{\includegraphics[width=\linewidth,trim={1cm 1cm 1cm 1cm},clip]{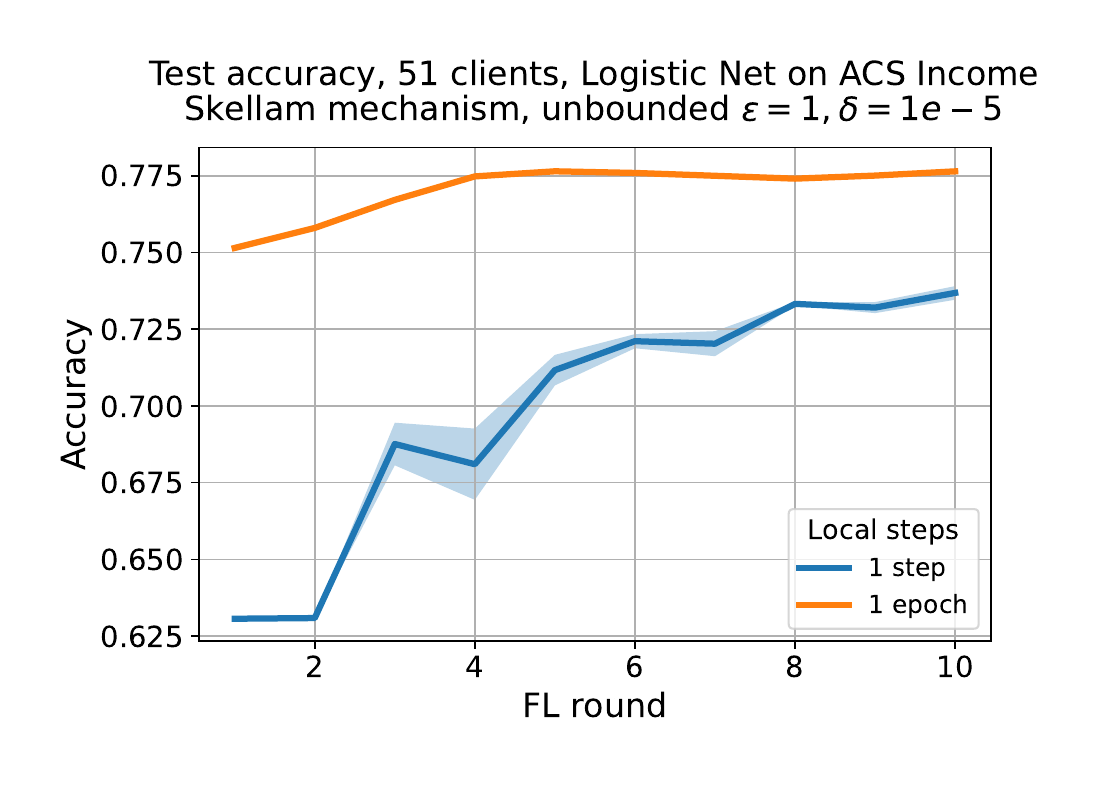}}
\caption{Test accuracy}
\label{fig:income_skellam_acc}
\end{subfigure}
\hfill
\begin{subfigure}[b]{.48\linewidth}
\centering
\centerline{\includegraphics[width=\linewidth,trim={1cm 1cm 1cm 1cm},clip]{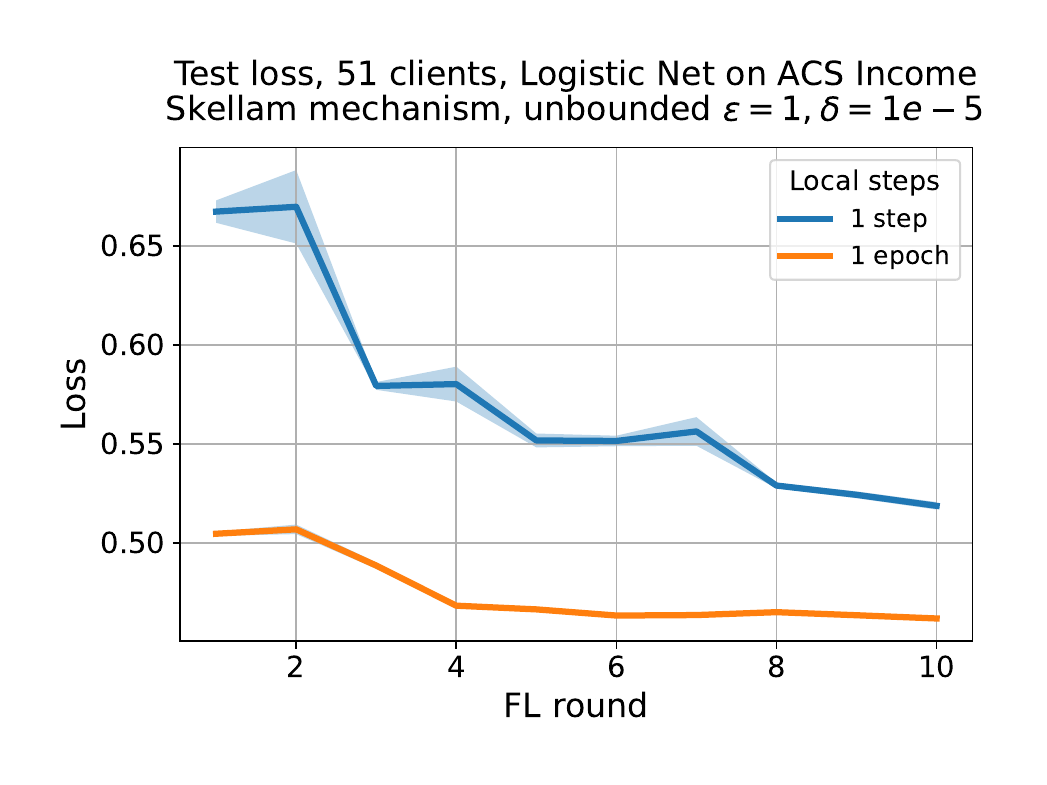}}
\caption{Test loss}
\label{fig:income_skellam_loss}
\end{subfigure}
\caption{LNN on ACS Income, 51 clients, mean and SEM over 5 seeds. Running more local steps is clearly beneficial.
\label{fig:income_skellam}}
\vspace{-1em}
\end{figure}


\paragraph{Improved Privacy from Model Averaging:}

Finally, our results might sometimes be of use also outside standard FL. For example, consider a setting where we have $N$ copies of a model trained on disjoint data sets (for example, think of independent parties learning a classifier on top of a common pre-trained model or fine-tuning a common pre-trained model; either scenario would usually lead to shared model structure and possibly also hyperparameters without explicit coordination), and the parties would like to combine the models post-hoc without running any joint training from scratch. Since this can be seen as FL with a single FL round, if the original model training on each party satisfies Assumption~\ref{assumption:common_assumptions} (or the relaxed assumptions in Appendix~\ref{sec:loosening_assumptions}), then a simple averaging of the weights will result in a joint model with improved privacy guarantees against adversaries without access to the original models.

To demonstrate this effect, we account for privacy assuming the same linear model used in Figure~\ref{fig:cifar10_pretrained_skellam} (but without actually training any models), Skellam mechanism with $32$ bit gradients,  Poisson subsampling with sampling probability $0.1$, and varying number of parties and local steps. 
The accounting is done as it would be done in a realistic setting: we first find a noise level $\sigma_{LDP}$ that results in the target privacy level (unbounded $(\varepsilon=5,\delta=1e-5)$-LDP) for each separate model with the chosen number of local steps. We then assume that the local training satisfies Assumption~\ref{assumption:common_assumptions} and calculate the privacy for averaging varying number of local models. Combining even $2$ models results in clearly improved privacy for the averaged model (see Table~\ref{table:1-FL-round-privacy} in Appendix~\ref{sec:additional_results}).


\section{Discussion}
\label{sec:discussion}

In this paper we have shown how to combine multiple local steps in DPFL using fine-grained protection granularities with SecAgg, and empirically demonstrated that this can bring considerable utility benefits under various communication-constrained settings. 
Our experimental results stand in stark contrasts with the message from the currently existing theoretical bounds for vanilla DPFL with FedAvg \citep[Theorem 3.2]{Malekmohammadi_Yu_Cao_2024}, which do not show any benefit from increasing the number of local steps. 
This disagreement of experimental and theoretical results underlines the need for improved theoretical analysis to understand the conditions under which increasing the number of local steps can lead to improved utility, similar to the recent breakthroughs in analysing non-DP FL \citep{pmlr-v162-mishchenko22b}.


\subsubsection*{Acknowledgments}

This research was partially supported by: 1) The Ministry of Economic Affairs and Digital Transformation of Spain and the European Union-NextGenerationEU programme for the "Recovery, Transformation and Resilience Plan" and the "Recovery and Resilience Mechanism" under agreements TSI-063000-2021-142, TSI-063000-2021-147 (6G-RIEMANN) and TSI-063000-2021-63 (MAP-6G), 2) The European Union Horizon 2020 program under grant agreements No.101096435 (CONFIDENTIAL6G) and No.101139067 (ELASTIC), and 3) the European Lighthouse on Safe and Secure AI (ELSA) from the European Union’s Horizon Europe program under grant agreement No 101070617. The views and opinions expressed are those of the authors only and do not necessarily reflect those of the European Union or other funding agencies. Neither the European Union nor the granting authority can be held responsible for them.


\bibliography{main}
\bibliographystyle{tmlr}

\appendix
\section{Appendix}
\label{sec:appendix}


\subsection{Definitions Omitted from the Main Paper}
\label{sec:additional_defs}

In the following, we write $\stackrel{d}{=}$ to mean equality of distributions. With this notation, we define infinite divisibility in a standard way (see, e.g., \citealt{feller-intro-prob}):
\begin{defn}[Infinite divisibility]
\label{def:infinite_divisibility}
    A distribution $F$ is infinitely divisible if for every $n$ there exists a family of distributions $F_n$ s.t. 
    $$ S_n \stackrel{d}{=} \sum_{i=1}^n X_{i,n}, $$
    where the random variables $S_n \sim F$ and $X_{i,n} \sim F_n$ are independent for all $i=1,\dots,n$.
\end{defn}

\subsection{Standard Algorithms Omitted from the Main Paper}
\label{sec:standard_algos}

For convenience, we give the pseudo-code for running DPFL with FedAvg when using DP-SGD for either sample- (Algorithm~\ref{alg:DPFL-sample-level-client-side}) or user-level (Algorithm~\ref{alg:DPFL-user-level-client-side}) protection. We emphasize that we do not claim any novel contribution in these algorithms, but they are included simply to clarify the discussion in the main paper.

\begin{algorithm}[H]
	\caption{(DP)FL with FedAvg: Server-side \citep{McMahan_2016}}
    \label{alg:DPFL-server-side}
	\begin{algorithmic}[1]
        \Require Number of clients $N$, number of FL rounds T, initial global model parameters $\theta^{(0)}$
        \For{$t \in [T]$}
            \State Choose $N_t$ clients for updating (w.l.o.g. numbered $1,\dots,N_t$) and send the current model $\theta^{(t-1)}$
            \State Receive pseudo-gradients $\Delta_i^{(t-1)}, i=1,\dots,N_t$ from the updating clients
            \State Update the global model: 
            \State $\theta^{(t)} \leftarrow \theta^{(t-1)} +\frac{1}{N_t} \sum_{i=1}^{N_t} \Delta_i^{(t-1)}$
        \EndFor
	\end{algorithmic}
	\textbf{Output: $\theta^{T}$}\\
\end{algorithm}

\begin{algorithm}[H]
	\caption{DPFL: Client-side with sample-level DP-SGD using Gaussian noise \citep{Song_Chaudhuri_Sarwate_2013,DBLP:conf/ccs/AbadiCGMMT016}}
    \label{alg:DPFL-sample-level-client-side}
	\begin{algorithmic}[1]
        \Require Number of local optimization steps $S$, local dataset $x_i$, loss function $f$, learning rate $\gamma$, clipping constant $C$, noise scale $\sigma$, Poisson subsampling probability $q$
        \State \textbf{Client receives current model $\theta$ from the server:}
        \State $\theta_0 \leftarrow \theta$
        \For{$s \in [S]$}
            \State \textbf{Update local model with DP-SGD:}
            \State $g_{i,s} \leftarrow 0$
            \State Sample minibatch $b_s$ from $x_i$ using Poisson sampling with probability $q$
            \For{$j \in [|b_s|]$}
                \State \textbf{Calculate $j$th per-sample gradient in minibatch, clip and accumulate:}
                \State $\tilde g_{i,s} \leftarrow \nabla f(b_{s,j}; \theta_{s-1})$
                \State $g_{i,s} \leftarrow g_{i,s} + \min\left(1, \frac{C}{\|\tilde g_{i,s} \|_2} \right) \tilde g_{i,s}$ 
            \EndFor
            \State \textbf{Add noise to clipped gradients and take a step:}
            \State $\theta^{(s)} \leftarrow \theta^{(s-1)} - \gamma \left( g_{i,s} + C \xi \right)$, where $\xi \sim \gN (0, \sigma^2 I_d)$ and $d$ is model dimensionality
            \label{alg:DPFL-line:sample-level-DPSGD-step}
        \EndFor
        \State $\Delta \leftarrow \theta^{S} - \theta_0$
	\end{algorithmic}
	\textbf{Output: $\Delta$} (Pseudo-gradients sent to the server)\\
\end{algorithm}

\begin{algorithm}[H]
	\caption{DPFL: Client-side with user-level DP using Gaussian noise \citep{DBLP:conf/iclr/McMahanRT018}}
    \label{alg:DPFL-user-level-client-side}
	\begin{algorithmic}[1]
        \Require Number of local optimization steps $S$, local dataset $x_i$, learning rate $\gamma$, clipping constant $C$, noise scale $\sigma$, Poisson subsampling probability $q$
        \State \textbf{Client receives current model $\theta$ from the server:}
        \State $\theta_0 \leftarrow \theta$
        \State \textbf{Update local model $S$ times with standard SGD:}
        \State $\theta^{(S)} \leftarrow$ from running SGD locally for $S$ steps with learning rate $\gamma$ on $x_i$
        \State $\Delta \leftarrow \theta^{S} - \theta_0$
        \State \textbf{Clip pseudo-gradients and add noise:}
        \State $\Delta \leftarrow \min \left(1,\frac{C}{\| \Delta \|_2} \right) \Delta + C\xi $, where $\xi \sim \gN \left(0,  \sigma^2 I_d \right)$ and $d$ is model dimensionality
	\end{algorithmic}
	\textbf{Output: $\Delta$} (Pseudo-gradients sent to the server)
\end{algorithm}

Essentially, for this example when aiming for DDP, at FL round $t$ our analysis enables choosing the number of local steps $S$ in Algorithm~\ref{alg:DPFL-sample-level-client-side} freely while still calibrating the DP noise by looking at the total noise level $\sum_{i=1}^{N_t} \sigma_i^2$ added by each client separately on each local step (line \ref{alg:DPFL-line:sample-level-DPSGD-step} in Algorithm~\ref{alg:DPFL-sample-level-client-side}). Note that for the special case of using Gaussian noise, a similar result has been presented by \cite{Noble_Bellet_Dieuleveut_2023}.


\subsection{Loosening Assumptions}
\label{sec:loosening_assumptions}

In the main paper, as stated in Assumption~\ref{assumption:common_assumptions}, for each FL round we have assumed constant learning rate $\gamma$, norm clipping bound $C$, noise level $\sigma$, and number of local optimization steps $S$, all of them shared by all the clients. Next, we consider loosening these assumptions. As before, w.l.o.g. we consider only a single FL round, and will therefore omit the index $t$.

Focusing first on the learning rate, we can immediately generalize our results to allow for different learning rate $\gamma_s$ for each local step $s=1,\dots,S$:
with this notation, following the reasoning of Theorem~\ref{thm:dpsgd_with_local_steps}, the aggregated update from the clients is given by 
\begin{equation}
    \sum_{i=1}^N \Delta_i = -\sum_{i=1}^N \sum_{s=1}^S \gamma_s \gA_{i}^{(s)} (\data_i; z_{i,s}) ,
\end{equation}
which can again be seen as post-processing the vector $\left( \sum_{i=1}^N \gA_i^{(1)}(\data_i), \dots, \sum_{i=1}^N \gA_i^{(S)}(\data_i) \right)$, so we can again use Lemma~\ref{lemma:from_vector_to_exchanged_sums} for accounting without encountering problems.

When considering client-specific learning rates things can be more complicated. The main issue now is to find proper sum-dominating mechanisms that satisfy: 
\begin{equation}
    \sup_{\data \simeq \data'} H_{\alpha} \left( \sum_{i=1}^N \gamma_{i,s} \gA_i^{(s)} (\data_i) \| \sum_{i=1}^N \gamma_{i,s} \gA_i^{(s)} (\data'_i) \right) \leq 
    \sup_{\data \simeq \data'} H_{\alpha} \left( \gA^{(s)} (\data) \| \gA^{(s)} (\data') \right), s=1,\dots,S .
\end{equation}

As a concrete example, assume $\gA_i^{(s)}$ is the continuous Gaussian mechanism with shared norm clipping constants and noise levels $C_{i,s} = C_s, \sigma_{i,s} = \sigma_s \ \forall i$. Dropping the step index $s$ for readability, let $\gamma_i = \frac{\gamma_1}{l_i}$ for some $l_i > 0, i=2,\dots,N$. Writing $g_i$ for a sum over the per-unit clipped gradients of client $i$, and $\xi_i \sim \mathcal N(0, C^2 \sigma^2 \cdot I_d)$ a single optimization step now contributes the following term for the global update: 
\begin{align}
    & -\sum_{i=1}^N \gamma_i \left( g_{i} + \xi_i \right) \\
    = & - \gamma_1 \left( g_1 + \xi_1 + \sum_{i=2}^N \frac{g_{i} + \xi_i }{l_i} \right) \\
    \label{eq:single_step_general_form}
    = & - \gamma_1 \left( g_1 + \sum_{i=2}^N \frac{g_i}{l_i} + \xi \right), 
\end{align}
where 
$\xi \sim \mathcal N (0, C^2 \sigma^2 [1 + \sum_{i=2}^N \frac{1}{l_i^2} ] \cdot I_d)$, which is a sum-dominating Gaussian mechanism. When accounting for the sum-dominating mechanism, it has sensitivity 
$C^* = \max \{C, \frac{C}{l_2}, \dots, \frac{C}{l_N} \}$, which in turn gives noise variance 
$(\frac{C}{C^*})^2 \sigma^2 [1 + \sum_{i=2}^N \frac{1}{l_i^2} ]$ for DP.

Similarly, we could relax the assumptions further to allow the clients to use different clipping and noise levels $C_i,\sigma_i$. As before, a single optimization step can again be written in the form of \Eqref{eq:single_step_general_form}, when 
\begin{equation}
    \xi \sim \mathcal N \left(0, \left[ C_1^2 \sigma_1^2 + \sum_{i=2}^N \frac{C_i^2 \sigma_i^2}{l_i^2}  \right] \cdot I_d \right). 
\end{equation}
For global privacy accounting with a sum-dominating Gaussian mechanism, suitable sensitivity is now given by 
$C^* = \max\{ C_1, \frac{C_2}{l_2},\dots, \frac{C_N}{l_N} \}$, 
and the resulting variance for accounting is 
$\sum_{i=1}^N (\frac{C_i \sigma_i }{C^* })^2$.

Assuming clients have differing number of local steps, we can try to fuse some local steps for the privacy analysis until all clients have the same number of steps $S$, after which we can then use the earlier results.%
\footnote{Alternatively, we could also consider breaking some local steps into several parts. We leave the detailed consideration of this approach for future work.}

As a simple example, assume we have 2 clients running DP-SGD: 
client 1 runs $S$ local steps using norm clipping constant $C$ and Gaussian mechanism with noise variance $\sigma^2$, while client 2 runs $2S$ local steps with clipping $C/2$ and Gaussian noise variance $\sigma^2$. The difference now is that while the clipping is done on each step, from the privacy accounting perspective we can disregard some noise and think that client 2 adds noise only on every other step. 
Looking at the update from client 2, we would then have 

\begin{align}
    \Delta_2 &= -\gamma \sum_{s=1}^{2S} (g_{2,s} + \mathbb I[s = 2l, l\in \mathbb N] \cdot \xi_{2,s}) \\
    \label{eq:fused-local-steps}
    &= -\gamma \sum_{s=1}^{S} (g'_{2,s} + \xi'_{2, s}),
\end{align}
where $g_{2,s}$ are the clipped per-sample gradients, $g'_{2,s} := g_{2,2s-1} + g_{2,2s}$, 
$\xi_{2,s}$ are the noise values, 
$\xi'_{2,s} := \xi_{2,2s}$, and $\mathbb I$ is the indicator function. Due to the clipping, the sensitivity of each fused step can be easily upper bounded via triangle-inequality: 
$\| g_{2,s'} \|_2 = \| g_{2,2s'-1} + g_{2,2s'} \|_2 \leq \| g_{2,2s'-1}\|_2 + \| g_{2,2s'} \|_2 \leq C$. Since \Eqref{eq:fused-local-steps} now has the same number of local steps as client 1 is taking, we can readily use the previous results to enable privacy accounting for the aggregated update. Combining the fusing of local steps with the previous notes on differing clipping norm values, learning rates and noise variances allows us to use our main results in several settings beyond what is stated in Assumption~\ref{assumption:common_assumptions}.

As a final note, when the clients use data subsampling for the local optimization, differing local subsampling probabilities can lead to having varying DP guarantees between the clients on the global level due to the different subsampling amplification effects, but can otherwise be incorporated with the same analysis we have already presented.


\subsection{From Ideal Trusted Aggregators to Practical SecAgg Protocols}
\label{sec:SecAgg_in_practice}

For implementing the trusted aggregator assumed in Theorem~\ref{thm:dpsgd_with_local_steps} in practice, it should be noted that as the sum over $s$ is done locally by each client during local optimization, it is always trusted as long as the individual clients are, while the sum over $i$ would need to be implemented, e.g., using a suitable SecAgg protocol. 
Several such algorithms are known, including the ones proposed by \cite{BellBGL020, BonawitzIKMMPRS17, SabaterBR22, SoGA21a}.

Using a SecAgg protocol will typically also place some extra requirements on the DP mechanisms $\gA_{i}^{(s)}$, since the SecAgg algorithms usually run on elements of finite rings. This precludes continuous noise mechanisms. 
A viable alternative is to use some suitable discrete noise mechanism, such as Skellam \citep{Agarwal_Kairouz_Liu_2021} or Poisson-binomial \citep{Chen_et_al_2022}. 
However, differing from the cases considered in the cited papers, 
since in our case the clients send model updates instead of single gradients, the finite ring size used in the SecAgg protocol needs to accommodate the model update size: it does no good to use Skellam mechanism with gradient quantization to a small number of bits, if the model weights and the resulting model update $\Delta_i$ for client $i$ still uses 32 bit floats.


\subsection{Privacy Accounting Details}
\label{sec:accounting_details}

For privacy accounting we utilize R\'enyi DP (RDP):
\begin{defn}
    \label{def:rdp}
    \citep{Mironov_2017}
	Let $\alpha > 1$ and $\varepsilon > 0$. 
	A randomised algorithm $\gA : \, \dataSpace^* \rightarrow \mathcal{O}$ is  $(\alpha,\veps)$-RDP 
	if for every $\data, \data' \in \dataSpace^*: \data \simeq \data'$ 
	\begin{equation*}
		\begin{aligned}
			& D_{\alpha} ( \gA (\data) \| \gA (\data') )  \leq \varepsilon ,
		\end{aligned}
	\end{equation*}
 where $D_{\alpha}$ is the R\'enyi divergence of order $\alpha$:
 \begin{equation*}
     D_{\alpha} (P \| Q) = \frac{1}{\alpha-1} \log \mathbb E_{t \sim Q} \left( \frac{p(t)}{q(t)} \right)^{\alpha}.
 \end{equation*}
\end{defn}

We do privacy accounting for all the experiments based on RDP. Generally, we account for the privacy of each individual local optimization step using the noise contributions from all the clients selected for a given FL round. When the clients use Poisson subsampling to sample minibatches (we assume each client uses the same probability for including any individual sample in the minibatch), we use standard RDP privacy amplification results. In practice, we use the RDP accountant implemented in Opacus \citep{opacus}, as well as bounds for Skellam mechanism by \citet{Agarwal_Kairouz_Liu_2021} and tight RDP amplification by Poisson subsampling \citep{Steinke_2022}. We calculate the privacy cost of the entire training run in RDP, and then convert into ADP using \citep[Proposition 3]{Mironov_2017}. Note that, as is common in DP research, we do not include the privacy cost of hyperparameter tuning in the reported privacy budgets (see, e.g., \citealt{Tramer_Boneh_2020} for some reasoning on this practice).


\subsection{Experimental Details}
\label{sec:experimental_details}

All the experimental settings we use satisfy Assumption~\ref{assumption:common_assumptions}. 
We use DP-SGD with Skellam mechanism to optimise the local model parameters, and standard federated averaging as the aggregation rule for updating the global model in all experiments. For each centralised data set (combining original train and test sets), we split the data randomly into equal shares, which results in having almost the same data distribution on each client. 
For hyperparameter optimization with each dataset, we first split each clients' data internally into train and test parts with fractions (.8-.2). For tuning all hyperparameters, we use only the training fraction, and divide it further (.7-.3) into hyperparameter train-validation. We use Bayesian optimization-based approach implemented in Weights and Biases \citep{wandb} for hyperparameter tuning, and simulate FL using Flower \citep{beutel2020flower}.

In general, when tuning hyperparameters we do 50 hyperparameter tuning runs. For each tuning run, we train the model on hyperparameter training fraction, test on the validation fraction, and try to optimise for the final model weighted validation loss. After finishing the hyperparameter tuning, we re-train the model from scratch 5 times with different random seeds with the best found hyperparameters using the entire original training data and testing on the test fraction. 
We report the mean and the standard error of the mean (SEM) in all the figures. In Figure~\ref{fig:cifar10_pretrained_skellam} we plot the minimum test loss/maximum test accuracy taken over the entire training run.

For the experiments with Fashion-MNIST \citep{xiao2017_fashionMNIST} and CIFAR-10 \citep{Krizhevsky_2009} data sets, we run hyperparameter tuning separately for each combination of number of local steps \{1 step, 1 epoch\}, and expected minibatch sizes on the grid $\{64, 128,256,512\}$ using Poisson subsampling.

For Fashion-MNIST the best expected batch sizes found are $512$ for 1 local epoch, and $128$ for 1 local step.

With CIFAR-10, due to heavy computational cost of hyperparameter tuning, we use a single expected batch size for each configuration of local steps \{1 step, 1 epoch\} and FL rounds $\{10,20,40,160\}$. Concretely, we pick the best expected batch size value from the above grid when using Bayesian optimization to tune all hyperparameters with $20$ FL rounds. This results in choosing expected batch size $128$ for $1$ local step and $256$ for $1$ local epoch. We then use these values and optimize all other hyperparameters separately for all other FL round settings.

With ACS Income data \citep{DBLP:conf/nips/DingHMS21}, we tune all hyperparameters for each combination of local steps \{1 step, 1 epoch\} with Poisson subsampling using local sampling probability on the grid $\{0.4, 0.2, 0.1, 0.05\}$ for $10$ FL rounds. We report results on the best found local sampling probabilities ($0.05$ for both).

For ResNeXt-29 8x64, we used pre-trained weights available from \url{https://github.com/bearpaw/pytorch-classification}. Our implementation of the Skellam mechanism is based on the implementation from \url{https://github.com/facebookresearch/dp_compression} \citep{Chaudhuri_Guo_Rabbat_2022,Guo_Chaudhuri_Stock_Rabbat_2023}.

For American Community Survey (ACS) Income data set \cite{DBLP:conf/nips/DingHMS21} we use the data for all the states and Puerto Rico for 2018. The goal is to predict whether an individual has income greater than \$$50000$. Instead of simulating data splits, we use the inherent splits, i.e., we take each original region (state or Puerto Rico) to be a client.

For training all models, we use a small cluster with NVIDIA Titan Xp, and NVIDIA Titan V GPUs. The total compute time of all the training runs (including debugging) over all GPUs amounts roughly to 30-60 GPU days.

\subsection{Additional Results}
\label{sec:additional_results}

Table~\ref{table:1-FL-round-privacy} shows the results from averaging several independently trained LDP models as described in Section~\ref{sec:experiments}.

\begin{table}[htb]
\caption{Improved privacy for averaged models, Skellam mechanism, 32 bits (no quantization), Poisson sampling with sampling fraction $0.1$, each local model is unbounded $(\varepsilon=5.,\delta=1e-5$)-LDP. Averaging more models improves on the DP guarantees against adversaries who do not have access to the original models.
\label{table:1-FL-round-privacy}}
\begin{center}
\begin{tabular}{|| l | c | c | c ||} 
 \hline
 Local steps & Parties & $\sigma_{total}$ & avg model $\varepsilon$ \\ [0.5ex] 
 \hline\hline
 $1$ step & $1$ & $0.69$ & $5.0$ \\ 
 $1$ step & $2$ & $0.98$ & $2.78$ \\ 
 $1$ step & $5$ & $1.54$ & $1.22$ \\ 
 $1$ step & $10$ & $2.18$ & $0.64$ \\ 
 \hline
 $1$ epoch & $1$ & $0.90$ & $5.0$ \\ 
 $1$ epoch & $2$ & $1.28$ & $2.61$ \\
 $1$ epoch & $5$ & $2.02$ & $1.19$ \\ 
 $1$ epoch & $10$ & $2.85$ & $0.72$ \\ 
 \hline
 $5$ epochs & $1$ & $1.18$ & $5.0$ \\ 
 $5$ epochs & $2$ & $1.67$ & $2.85$ \\
 $5$ epochs & $5$ & $2.64$ & $1.55$ \\
 $5$ epochs & $10$ & $3.73$ & $1.03$ \\
 \hline
\end{tabular}
\end{center}
\end{table}

\end{document}